\documentclass[11pt]{article}

\usepackage[utf8]{inputenc}
\usepackage[T1]{fontenc}
\usepackage{lmodern}

\usepackage{amsmath}
\usepackage{amssymb}
\usepackage{amsfonts}
\usepackage{mathtools}
\usepackage{amsthm}

\usepackage{graphicx}
\usepackage{booktabs}
\usepackage{array}
\usepackage{adjustbox}
\usepackage{enumitem}
\usepackage{microtype}
\usepackage{natbib}
\usepackage{xcolor}
\usepackage{url}

\usepackage{bm}

\usepackage{tabularx}
\usepackage{placeins}

\usepackage{tikz}
\usetikzlibrary{arrows.meta,positioning,calc,shapes.geometric,decorations.pathreplacing}
\usepackage{pgfplots}
\pgfplotsset{compat=1.18}
\usepackage{caption}
\usepackage{subcaption}

\usepackage{hyperref}

\usepackage{cleveref}

\graphicspath{{figures/}}
\usepackage[normalem]{ulem}
\usepackage{booktabs}
\usepackage{enumitem}

\IfFileExists{results/orbit_numbers.tex}{% Generated by code/orbit_audit.py; do not edit by hand.
\newcommand{\numOrbitSynthD}{256}
\newcommand{\numOrbitSynthStates}{32}
\newcommand{\numOrbitSynthAudits}{1000}
\newcommand{\numOrbitSynthTreeEdges}{31}
\newcommand{\numOrbitSynthCycleEdges}{32}
\newcommand{\numOrbitSynthCompleteEdges}{496}
\newcommand{\numOrbitSynthRank}{31}
\newcommand{\numOrbitSynthEmpiricalMeanRatio}{1.1362}
\newcommand{\numOrbitSynthTheoryMeanRatio}{1.1390}
\newcommand{\numOrbitDigitsSeeds}{5}
\newcommand{\numOrbitDigitsSplits}{3}

\newcommand{\numOrbitDigitsStates}{16}
\newcommand{\numOrbitDigitsPairBudget}{480}
\newcommand{\numOrbitDigitsAllPairs}{120}
\newcommand{\numOrbitDigitsHiddenWidth}{512}
\newcommand{\numOrbitDigitsGateAcceptPct}{0}
\newcommand{\numOrbitDigitsIdentityAuditRank}{377.8667}
\newcommand{\numOrbitDigitsGrayAuditRank}{369.8667}
\newcommand{\numOrbitDigitsCubeAuditRank}{225}
\newcommand{\numOrbitDigitsCompleteAuditRank}{60}
\newcommand{\numOrbitNullspaceMax}{\ensuremath{2.0\times 10^{-15}}}
\newcommand{\numOrbitBetaMinP}{0.0286}
\newcommand{\numOrbitPoincareMinEig}{\ensuremath{-9.3\times 10^{-16}}}
\newcommand{\numOrbitCycleLeakageMax}{\ensuremath{5.8\times 10^{-15}}}
\newcommand{\numOrbitWoodburyMaxRelErr}{\ensuremath{2.9\times 10^{-16}}}
\newcommand{\numOrbitGateCoveragePct}{99.8300}
\newcommand{\numOrbitGateTrials}{\ensuremath{1.0\times 10^{4}}}
\newcommand{\numOrbitGateDriftClip}{4}

\newcommand{\numOrbitDigitsIdentityDeployedAccuracy}{0.9637}

\newcommand{\numOrbitDigitsIdentityExactDrift}{\ensuremath{4.9\times 10^{-5}}}

\newcommand{\numOrbitDigitsIdentityExactAccuracy}{0.4964}

\newcommand{\numOrbitDigitsCubeVsIdentityRidgeDriftDelta}{-0.1880}
\newcommand{\numOrbitDigitsCubeVsIdentityRidgeDriftDeltaLo}{-0.2949}
\newcommand{\numOrbitDigitsCubeVsIdentityRidgeDriftDeltaHi}{-0.0969}
}{}

\theoremstyle{plain}
\newtheorem{theorem}{Theorem}
\newtheorem{proposition}[theorem]{Proposition}
\newtheorem{corollary}[theorem]{Corollary}

\newcommand{\R}{\mathbb R}
\newcommand{\E}{\mathbb E}
\newcommand{\Pp}{\mathbb P}

\newcommand{\risk}{\mathsf R}

\newcommand{\LOOO}{\textsc{looo}}

\setlist{itemsep=1pt,topsep=2pt,leftmargin=1.4em}
\hypersetup{
    colorlinks=true,
    linkcolor=blue,
    citecolor=blue,
    urlcolor=blue,
    pdftitle={Interpolation Is Not Invariance:\\
Pair Count Is Not Coverage in Transformation Audits},
    pdfauthor={Mohammed AHNOUCH and Lotfi ELAACHAK}
}

\title{Interpolation Is Not Invariance:\\
Pair Count Is Not Coverage in Transformation Audits}
\author{
    Mohammed AHNOUCH
    \\
    Université Paris 1
    \\
    Paris, France
    \and
    Lotfi ELAACHAK
    \\
    Faculty of Science and Technology of Tangier
    \\
    Abdelmalek Essaadi University
    \\
    Tangier, Morocco
}

\date{}

\begin{document}

\raggedbottom
\maketitle

\begin{abstract}
Counting equivalent pairs is the usual way to report how thoroughly a
transformation audit covers a model, and it overstates what the audit
constrains.  Pairs drawn from one semantic object are correlated, and on a
complete orbit graph most of them are algebraically redundant.  We therefore
separate four quantities that a raw count conflates: edge count $m$, effective
contrast rank $s$, population support rank $r$, and graph spectral gap $\eta$.
In a rank-$r$ Gaussian contrast model, a calibrated reader that is invariant on
the population exists exactly when the anchor has a component in $\ker T$.
Failing that, an audit of rank $s<r$ pays an unseen risk of $\risk_\star/U$ with
$U\sim\operatorname{Beta}((r-s)/2,s/2)$, and once $s\ge r$ calibrated exact
interpolation is infeasible.  Orbit topology behaves the same way: a spanning
tree imposes the exact-null constraints that a complete graph does, and a sharp
graph Poincar\'e inequality carries edge drift to the whole orbit only after
paying $1/\eta$.  Cycles can even let pair-level leave-one-out report zero error
while no semantic object was ever held out.  We therefore give exact
block-Woodbury leave-one-orbit-out updates, and a source-disjoint deployment
gate over finitely many candidates that keeps the original reader unless
uncertainty bounds certify lower drift inside a clean-utility budget.  Synthetic
and frozen-digit audits then separate nominal pair count from algebraic rank and
spectral coverage.
\end{abstract}

\section{Introduction}

Invariance audits exist because a model that answers differently on two inputs its
owner declared equivalent is a liability, and they report pair count as coverage.
Such an audit rarely contains independent pairs: it usually starts from
one semantic object, an image, intent, patient, or scene, and compares several
transformations of that object.  Record every edge of the resulting orbit graph
and a $q$-state orbit contributes $q(q-1)/2$ pairs, yet those pairs can encode
only $q-1$ independent contrasts.  A complete graph can therefore make an audit
look far larger than the representation it constrains, while adding no exact
constraint that a spanning tree does not already impose.

The gap matters as soon as a frozen representation $h$ feeds a linear reader.
Such a reader can disagree on two inputs that were declared equivalent, and the
tempting repair is to leave $h$ alone and adjust the reader until every audited
contrast vanishes.  Paired translations and paraphrases motivate the problem
\citep{yong2024lowresource,deng2024multilingual,wang2024languages}, and
controlled image experiments isolate its statistical geometry.  Driving the edge
residual to zero, however, is interpolation and nothing more: invariance asks for
small drift on a fresh semantic object, and on transformation states that no
audited edge ever joined.

Pair count alone therefore does not measure how strong an audit is.  Three
quantities pull apart, the edge count $m$, the effective contrast rank $s$, and
the population support rank $r$, and a fourth, the graph spectral gap $\eta$,
decides separately whether local comparisons certify an entire orbit.
\Cref{fig:orbit-claim} summarizes the separation.

\begin{figure}[t]
\centering
\begin{tikzpicture}[
  box/.style={draw,rounded corners=2pt,align=center,inner sep=4pt,
              text width=25mm,font=\footnotesize},
  arr/.style={-{Latex[length=1.8mm]},thick}]
  \node[box] (objects)    at (0mm,0mm)    {independent\\semantic objects};
  \node[box] (graphs)     at (36.8mm,0mm) {transformation\\orbit graphs};
  \node[box] (matrix)     at (73.6mm,0mm) {weighted contrast\\blocks $U_j$};
  \node[box] (repair)     at (110.4mm,0mm){utility-aware\\reader candidates};
  \node[box] (certificate) at (36.8mm,-20mm)
        {spectral certificate\\on unseen state pairs};
  \node[box] (validation)  at (110.4mm,-20mm)
        {leave one orbit out\\and source-held-out gate};
  \draw[arr] (objects.east) -- (graphs.west);
  \draw[arr] (graphs.east)  -- (matrix.west);
  \draw[arr] (matrix.east)  -- (repair.west);
  \node[font=\scriptsize] at (55.2mm,8mm) {$m,s,\eta$};
  \draw[arr] (graphs.south) -- (certificate.north);
  \draw[arr] (repair.south) -- (validation.north);
  \draw[arr] (certificate.east) -- (validation.west);
\end{tikzpicture}
\caption{Pair count is not coverage.  Orbit topology determines exact contrast
rank; its gap converts a held-out bound on local edge drift into an all-state
certificate; and semantic objects, rather than correlated edges, are held out
for selection.}
\label{fig:orbit-claim}
\end{figure}

Three results follow.  A rank-aware finite-sample theorem shows that correlated
Gaussian audits pay the same beta-law interpolation cost as $s$ independent
constraints however large $m$ becomes, covers singular population geometry, and
says when exact invariance is available and when calibrated interpolation is
impossible.  A sharp graph Poincar\'e inequality then
turns population transition drift into an all-state orbit certificate, while a
cycle lemma explains why leaving out a single pair can be vacuous.  Finally,
utility-aware ridge, an exact block-Woodbury leave-one-orbit-out calculation and
a finite-candidate confidence gate combine into a rule that defaults to the
deployed reader.  A synthetic experiment isolates rank from edge count, and a
frozen-digits experiment compares orbit graphs at a fixed pair budget.

\noindent\textbf{Positioning.}
Nullspace projection and concept-erasure methods remove linearly recoverable
information from a representation \citep{ravfogel2020inlp,belrose2023leace,holstege2025splice}.
We instead leave the representation available to every other task and edit one
readout using within-object contrasts.  Supervised augmentation and explicit
symmetry methods can establish stronger invariance when labels or the complete
group action are available \citep{tahmasebi2026augmentation,soleymani2026symmetry};
our question is what can be certified from a finite, correlated audit.  Graph
Poincar\'e inequalities, block inverse identities, and spectral-gap design are
classical \citep{levin2017markov,chung1997spectral,ghosh2006growing,golub2013matrix}.
The contribution is their
combination into an orbit-aware audit and deployment rule, not a new graph
inequality or matrix identity.  Source-aware cross-validation is likewise
established \citep{geras2013cv}; here it prevents transformed copies of one
object from leaking across repair selection.

\section{Audits are graphs, not bags of pairs}
\label{sec:setup}

We treat an audit as one graph per object rather than a list of pairs.  Let
$x_j$, $j=1,\ldots,n$, be independent semantic objects.  Object $j$ has
transformation states $V_j$ and representations
$z_{jv}=h(g_vx_j)\in\R^d$.  An oriented weighted graph
$\mathcal G_j=(V_j,E_j)$ specifies which states are compared.  If
$B_j\in\R^{|V_j|\times |E_j|}$ is its weighted incidence matrix and
$H_j=[z_{jv}]_{v\in V_j}$, its contrast block is
\begin{equation}
  U_j=H_jB_j,\qquad X=[U_1,\ldots,U_n],\qquad
  \widehat T_G=XX^\top .
  \label{eq:orbit-design}
\end{equation}
Weights include any normalization, so deleting an object means deleting its
whole block without changing the remaining weights.  For a reader $w$, the
columns of $U_j$ contain its audited score differences.

The raw count is $m=\sum_j|E_j|$.  The effective contrast rank is the rank of
the corresponding audit design, denoted $s$ in the model below; the fresh-pair
second moment is $T\succeq0$ with support rank
$r=\operatorname{rank}(T)$.  Finally, $\eta$ is the spectral gap of the
reversible state-transition graph used for the certificate in
\cref{sec:graph}.  These quantities answer different questions:
\begin{center}
\small
\begin{tabular}{@{}llll@{}}
\toprule
quantity & meaning & increased by duplicate edges? & role \\
\midrule
$m$ & recorded comparisons & yes & compute and annotation cost \\
$s$ & independent audit contrasts & no & interpolation geometry \\
$r$ & population contrast support & no & unseen-risk dimension \\
$\eta$ & orbit connectivity/conditioning & not necessarily & coverage certificate \\
\bottomrule
\end{tabular}
\end{center}

For a fresh declared-equivalent contrast $D$, define
\begin{equation}
  T=\E[DD^\top],\qquad \risk(w)=\E(w^\top D)^2=w^\top Tw.
  \label{eq:population-risk}
\end{equation}
The deployed reader $a$ is normalized to $\|a\|=1$.  We call $a^\top w=1$
\emph{anchor calibration}.  It fixes one affine degree of freedom; by itself
it neither preserves decisions nor bounds clean accuracy.  Those properties
are measured separately below.  An exactly interpolating reader satisfies
$X^\top w=0$ in addition to anchor calibration.

\section{The effective-rank law}
\label{sec:rank-law}

The following model separates raw columns from their algebraic content.  Let
$Z\in\R^{d\times k}$ have independent standard Gaussian entries, let
$A\in\R^{k\times m}$ be deterministic with rank $s$, and set
\begin{equation}
                 D_{\rm aud}=T^{1/2}ZA.
                 \label{eq:correlated-audit}
\end{equation}
The $m$ columns may be strongly correlated.  Only the $s$-dimensional column
space selected by $A$ enters the null constraint.  An incidence matrix is the
canonical example.

\begin{theorem}[Rank-aware law for exact audit interpolation]
\label{thm:rank-beta}
Let $T\succeq0$ be deterministic with rank $r$, let $A$ and $Z$ be as in
\cref{eq:correlated-audit}, and let $a_0=P_{\ker T}a$.
\begin{enumerate}[label=(\roman*),leftmargin=1.7em]
\item If $a_0\ne0$, then $w_0=a_0/\|a_0\|^2$ is anchor-calibrated, annihilates
the audit, and has $\risk(w_0)=0$.
\item If $a_0=0$ and $1\le s<r$, then almost surely
\begin{equation}
 \min_{D_{\rm aud}^\top w=0,\;a^\top w=1}\risk(w)
 =\frac{\risk_\star}{U},\qquad
 U\sim\operatorname{Beta}\!\left(\frac{r-s}{2},\frac{s}{2}\right),
 \quad \risk_\star=(a^\top T^\dagger a)^{-1}.
 \label{eq:rank-beta}
\end{equation}
For $s=0$, the same statement holds with $U\equiv1$.
\item If $a_0=0$ and $s\ge r$, no anchor-calibrated exact interpolator exists
almost surely.
\end{enumerate}
In case (ii), for every anchor-calibrated exact audit interpolator and
$\varrho\ge1$,
\begin{align}
 \Pp\{\risk(w)>\varrho\risk_\star\}
 &\ge I_{1/\varrho}\!\left(\frac{r-s}{2},\frac{s}{2}\right),
 \label{eq:rank-tail}\\
 \E\!\left[\min\risk(w)\right]
 &=\risk_\star\frac{r-2}{r-s-2},\qquad s\le r-3,
 \label{eq:rank-mean}
\end{align}
where $I_x$ is the regularized incomplete beta function and the mean is
infinite for $s\in\{r-2,r-1\}$.  Along a sequence with $r,s\to\infty$ and
$r/s\to\gamma>1$, the ratio of the minimum to $\risk_\star$ converges in
probability to $\gamma/(\gamma-1)$.
\end{theorem}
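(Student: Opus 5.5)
Part (i) is immediate. Since $a_0\in\ker T$, we have $Ta_0=0$, and therefore $T^{1/2}a_0=0$. It follows that $D_{\rm aud}^\top w_0=A^\top Z^\top T^{1/2}w_0=0$ and $\risk(w_0)=w_0^\top Tw_0=0$. For calibration, $a^\top a_0=\|P_{\ker T}a\|^2=\|a_0\|^2$, so $a^\top w_0=1$.

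For (ii) and (iii) we reduce everything to the range of $T$. Write $T=Q\Lambda Q^\top$ with $Q\in\R^{d\times r}$ orthonormal and $\Lambda\succ0$. The audit constraint $A^\top Z^\top T^{1/2}w=0$ depends on $w$ only through $v=\Lambda^{1/2}Q^\top w\in\R^r$, and $\risk(w)=\|v\|^2$. Because $a_0=0$, we have $a=Qb$, so $a^\top w=b^\top Q^\top w=c^\top v$ with $c=\Lambda^{-1/2}b$. Every $v\in\R^r$ is attained by some $w$, so the problem becomes: minimize $\|v\|^2$ subject to $c^\top v=1$ and $v\perp\operatorname{col}(G)$, where $G=Q^\top Z A\in\R^{r\times m}$ (using $Q^\top T^{1/2}=\Lambda^{1/2}Q^\top$). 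The matrix $Q^\top Z$ is an $r\times k$ standard Gaussian, and $A$ has rank $s$, so $\operatorname{col}(G)=Q^\top Z\operatorname{col}(A)$. Taking an orthonormal basis $E$ of $\operatorname{col}(A)$, $Q^\top ZE$ is $r\times s$ standard Gaussian. Hence $\operatorname{col}(G)$ is almost surely a uniformly (Haar) distributed subspace $S$ of dimension $\min(s,r)$.

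If $s\ge r$, then $S=\R^r$, so $v=0$, which contradicts $c^\top v=1$; this proves (iii). If $s<r$, the minimizer is the minimum-norm solution in $S^\perp$, namely $v=P_{S^\perp}c/\|P_{S^\perp}c\|^2$, with value $1/\|P_{S^\perp}c\|^2$. Here $\|c\|^2=b^\top\Lambda^{-1}b=a^\top T^\dagger a=1/\risk_\star$. So the minimum equals $\risk_\star/U$ with $U=\|P_{S^\perp}c\|^2/\|c\|^2$. By rotation invariance, $U$ has the law of the fraction of squared norm that a uniform unit vector in $\R^r$ places on a fixed $(r-s)$-dimensional coordinate subspace. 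Writing $g\sim N(0,I_r)$, this is $\chi^2_{r-s}/(\chi^2_{r-s}+\chi^2_s)$ with independent chi-squares, which is $\operatorname{Beta}((r-s)/2,s/2)$. When $s=0$, $S=\{0\}$ and $U\equiv1$. Almost surely $P_{S^\perp}c\ne0$, so the minimum is finite.

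The consequences follow from this representation. Any feasible $w$ satisfies $\risk(w)\ge\min=\risk_\star/U$. Therefore $\Pp\{\risk(w)>\varrho\risk_\star\}\ge\Pp\{U<1/\varrho\}=I_{1/\varrho}((r-s)/2,s/2)$; the event $U=1/\varrho$ has probability zero, so strict versus non-strict inequality does not matter. For the mean, $\E[U^{-1}]$ for $U\sim\operatorname{Beta}(\alpha,\beta)$ equals $(\alpha+\beta-1)/(\alpha-1)$ when $\alpha>1$, which gives $(r-2)/(r-s-2)$. The integral $\int u^{\alpha-2}\,du$ diverges when $\alpha\le1$, that is, when $s\in\{r-2,r-1\}$. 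For the asymptotic claim, $\chi^2_{r-s}/r\to(\gamma-1)/\gamma$ and $\chi^2_s/r\to1/\gamma$ in probability by the law of large numbers applied to sums of squared Gaussians, so $U\to(\gamma-1)/\gamma$ and $1/U\to\gamma/(\gamma-1)$ by continuous mapping.

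The main obstacle is the careful reduction in (ii)–(iii): the audit in the range of $T$ must be a Haar subspace of dimension exactly $\min(s,r)$ even though $A$ is arbitrary with correlated columns. This hinges on $\operatorname{col}(ZA)=Z\operatorname{col}(A)$ and the orthogonal invariance of the Gaussian matrix $Q^\top ZE$. The argument must also correctly handle the $\ker T$ directions of $w$: they are unconstrained but contribute neither to the risk nor, when $a_0=0$, to the calibration.
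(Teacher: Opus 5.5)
Your proposal is correct and follows the paper's proof essentially step for step. You whiten to $v=\Lambda^{1/2}Q^\top w$ on the range of $T$, use Gaussian rotation invariance to obtain a Haar plane of dimension $\min(s,r)$ (your orthonormal basis $E$ of $\operatorname{col}(A)$ plays the role of the paper's singular factor $L$), solve the minimum-norm problem on its complement, and read off the Beta law, its inverse moment, and its concentration; you simply spell out the chi-square representation, the computation $\E[U^{-1}]=(\alpha+\beta-1)/(\alpha-1)$, and the law-of-large-numbers limit that the paper states in one line.
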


\begin{proof}
Part (i) follows from $Ta_0=0$ and
$a^\top a_0=\|a_0\|^2$.  For the other cases, write
$T=V\Lambda V^\top$, where $V\in\R^{d\times r}$ has orthonormal columns and
$\Lambda\succ0$.  Since $a_0=0$, put
$b=\Lambda^{-1/2}V^\top a$ and
$y=\Lambda^{1/2}V^\top w$.  Kernel components of $w$ affect neither the
constraints nor the objective, and
\[
 \risk(w)=\|y\|^2,\qquad a^\top w=b^\top y,
 \qquad \|b\|^2=a^\top T^\dagger a.
\]
Take a thin singular-value decomposition $A=L\Sigma R^\top$.  Rotational
invariance makes $V^\top ZL$ a standard Gaussian $r\times s$ matrix, while
$\Sigma R^\top$ does not change its column span.  Thus interpolation is
$y\perp\mathcal S$ for a Haar-random $s$-plane $\mathcal S\subset\R^r$.
When $s<r$, Cauchy--Schwarz on $\mathcal S^\perp$ gives the unique minimum-risk
coordinate
\[
 y_\star=\frac{P_{\mathcal S^\perp}b}
                 {\|P_{\mathcal S^\perp}b\|^2},\qquad
 \|y_\star\|^2=\frac{1}{\|b\|^2U},\quad
 U=\frac{\|P_{\mathcal S^\perp}b\|^2}{\|b\|^2}.
\]
The squared projection of a fixed direction onto a Haar $(r-s)$-plane is the
beta variable in \cref{eq:rank-beta} \citep{muirhead1982aspects}.  Its inverse
tail and first inverse moment give \cref{eq:rank-tail,eq:rank-mean}; beta
concentration gives the limit.  If $s\ge r$, the audit span equals $\R^r$
almost surely, forcing $y=0$, which contradicts $b^\top y=1$.
\end{proof}

The kernel case is a genuine population-invariant escape, but it may move far
from the deployed reader; this is precisely why anchor calibration is not a
utility guarantee.  In the usual case $a\in\operatorname{range}(T)$, the
theorem replaces the nominal sample size $m$ by $s$.  In particular, when
$1\le s<r$, for $\risk_0=a^\top Ta$ every anchor-calibrated exact audit
interpolator is worse than deployment with probability at least
\begin{equation}
 I_{\risk_\star/\risk_0}\!\left(\frac{r-s}{2},\frac{s}{2}\right),
 \label{eq:worse-than-deployed}
\end{equation}
because $(a^\top Ta)(a^\top T^\dagger a)\ge1$ for normalized
$a\in\operatorname{range}(T)$.

\begin{corollary}[Edge redundancy]
\label{cor:incidence-rank}
If $B$ is an incidence matrix of a graph with $q$ vertices and $c$ connected
components, then $\operatorname{rank}(B)=q-c$.  A complete graph and any
spanning tree on the same connected orbit therefore impose identical
exact-null constraints $B^\top H^\top w=0$, despite having different edge
counts.
\end{corollary}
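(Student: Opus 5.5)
The plan is to prove the rank identity $\operatorname{rank}(B)=q-c$ by computing the kernel of $B^\top$, then read off the equality of exact-null constraints as a statement about column spaces. I will work with the oriented weighted incidence matrix of \cref{sec:setup}, with nonzero edge weights. Column $e=(u,v)$ of $B$ is $\omega_e(\mathbf e_u-\mathbf e_v)$ with $\omega_e\ne0$. Hence for $x\in\R^q$ we have $(B^\top x)_e=\omega_e(x_u-x_v)$.

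\textbf{The kernel of $B^\top$.} The identity $B^\top x=0$ holds exactly when $x_u=x_v$ along every edge. By induction along paths, $x$ is then constant on each connected component. Conversely, any vector constant on components is annihilated. So $\ker B^\top$ is spanned by the $c$ component indicator vectors. These vectors have disjoint supports and are therefore linearly independent, giving $\dim\ker B^\top=c$. Rank–nullity applied to $B^\top\colon\R^q\to\R^{|E|}$ then gives $\operatorname{rank}(B)=\operatorname{rank}(B^\top)=q-c$.

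\textbf{Equality of constraints.} For the second claim, note that $\operatorname{col}(B)=(\ker B^\top)^\perp$. This space depends only on the partition of the vertices into connected components, not on which edges realize the connectivity. A complete graph and a spanning tree on the same connected orbit both have $c=1$. Their incidence matrices therefore share the column space $\mathbf 1^\perp\subset\R^q$, of dimension $q-1$, even though they have $q(q-1)/2$ and $q-1$ columns respectively.

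\textbf{Transfer to $w$.} The constraint $B^\top H^\top w=0$ says that $u=H^\top w$ is orthogonal to $\operatorname{col}(B)$, i.e.\ $u\in\ker B^\top$. Since $\ker B^\top$ is the same for both graphs, the solution sets in $w$ coincide. Equivalently, each tree column generates every complete-graph column: the complete-graph edge $(u,v)$ equals, up to its nonzero weight, the signed sum of tree edges along the unique tree path from $u$ to $v$.

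\textbf{Main obstacle.} The only delicate point is the weights. If some $\omega_e=0$, that edge would effectively disappear from the graph and the rank count would drop, so the argument needs nonzero weights, which the weighted setup assumes. Once that is in place, the rest is the standard path-induction argument.
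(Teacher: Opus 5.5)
Your proof is correct and follows essentially the same route as the paper. Both identify $\ker B^\top$ as the vectors constant on each component to obtain $\operatorname{rank}(B)=q-c$, and then use the fact that any two connected graphs on the same states share the incidence column span $\mathbf{1}^\perp$; your transfer step, writing the constraint as $H^\top w\in\ker B^\top$, is simply the dual of the paper's remark that applying $H$ preserves equality of column spans, and your explicit nonzero-weight caveat makes precise an assumption the paper leaves implicit.
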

\begin{proof}
The kernel of $B^\top$ consists of vectors constant on each component, hence
has dimension $c$.  For any two connected graphs, the incidence columns span
the same zero-sum subspace; applying the same linear map $H$ preserves equality
of their column spans.
\end{proof}

Exact constraints depend only on connected components.  Approximate repair is
different: edge weights and graph conditioning decide how small observed
edge energy controls the unobserved orbit.

\section{From local edges to orbit coverage}
\label{sec:graph}

Edges are local while the claim we want covers every state.  Fix an object $x$
and write $f_x(v)=w^\top h(g_vx)$.  Let $P$ be a reversible
Markov kernel on a finite set of transformation states with stationary law
$\pi$.  Draw
$V,V'\stackrel{\rm iid}{\sim}\pi$ and $W\mid V\sim P(V,\cdot)$, and average
also over fresh objects:
\begin{align}
 \risk_{\rm orbit}(w)&=\E_{x,V,V'}[f_x(V)-f_x(V')]^2,
 \label{eq:orbit-risk}\\
 \risk_{\rm edge}(w)&=\E_{x,V,W}[f_x(V)-f_x(W)]^2.
 \label{eq:edge-risk}
\end{align}
Let $\eta$ be the Poincar\'e spectral gap of $P$ on mean-zero functions
\citep{levin2017markov,chung1997spectral}.

\begin{theorem}[Sharp orbit certificate]
\label{thm:graph-poincare}
If the transformation graph is connected and reversible, then every reader
satisfies
\begin{equation}
                   \risk_{\rm orbit}(w)
                   \le \frac{\risk_{\rm edge}(w)}{\eta}.
                   \label{eq:graph-certificate}
\end{equation}
The constant $1/\eta$ is sharp.
\end{theorem}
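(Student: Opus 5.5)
The plan is to prove \cref{eq:graph-certificate} one object at a time and then average over $x$. Because the kernel $P$, the law $\pi$ and the gap $\eta$ do not depend on $x$, an inequality that holds pointwise in $x$ survives taking $\E_x$.

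Fix $x$ and write $f=f_x$. Then
\[
\E_{V,V'}[f(V)-f(V')]^2=2\operatorname{Var}_\pi(f).
\]
Similarly, $\E_{V,W}[f(V)-f(W)]^2=2\mathcal E(f,f)$, where
\[
\mathcal E(f,f)=\tfrac12\sum_{u,v}\pi(u)P(u,v)(f(u)-f(v))^2
\]
is the Dirichlet form of $P$. Reversibility matters here: it is what lets us identify $\E_{V,W}$ with this symmetric form, since $\pi(u)P(u,v)=\pi(v)P(v,u)$ makes $(V,W)$ exchangeable.

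The Poincar\'e gap is defined as
\[
\eta=\inf\bigl\{\mathcal E(f,f)/\operatorname{Var}_\pi(f):\ \operatorname{Var}_\pi(f)>0\bigr\},
\]
which is the smallest nonzero eigenvalue of $I-P$ on $L^2(\pi)$ restricted to mean-zero functions. Connectedness of the graph gives irreducibility, so the eigenvalue $0$ is simple with constant eigenfunctions and $\eta>0$. Consequently $\operatorname{Var}_\pi(f)\le \mathcal E(f,f)/\eta$ for every $f$; constant $f$ give $0\le 0$ trivially. Multiplying by $2$ gives the per-object bound, and taking $\E_x$ gives \cref{eq:graph-certificate}.

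For sharpness, let $\varphi$ be a real eigenfunction of $I-P$ with eigenvalue $\eta$. It is mean-zero, and it can be taken real because $P$ is self-adjoint in $L^2(\pi)$. For $\varphi$ the variational ratio is attained, so equality holds per object. To realize $\varphi$ as a reader response we need a representation and an object family with $w^\top h(g_vx)=\varphi(v)$. The simplest choice is a degenerate object law: a single object whose state representations are $z_v=\varphi(v)e_1$, with $w=e_1$. The theorem's constant is universal over representations, so exhibiting one attaining configuration shows that no constant smaller than $1/\eta$ works.

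I expect this last step to be the main obstacle, and it is a matter of interpretation. If sharpness must hold for a fixed $h$, it requires the span of $\{v\mapsto w^\top h(g_vx)\}$ over $w$ to contain $\varphi$; I would state that richness condition explicitly, noting that it holds whenever the state representations are affinely independent. The rest is the standard variational characterization of the spectral gap, which I cite from \citep{levin2017markov}.
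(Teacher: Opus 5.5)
Your proposal is correct and takes essentially the same route as the paper. You fix $x$, write the two risks as $2\operatorname{Var}_\pi(f_x)$ and $2\langle f_x,(I-P)f_x\rangle_\pi$, apply the variational definition of $\eta$, average over $x$, and use an eigenfunction for the gap for sharpness; your explicit realization of that eigenfunction as a reader response (via $z_v=\varphi(v)e_1$, or affine independence of the states for a fixed $h$, since adding constants changes neither risk) makes precise a step the paper leaves implicit.

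One small correction, which applies equally to the paper's wording: the identity $\E_{V,W}[f(V)-f(W)]^2=2\langle f,(I-P)f\rangle_\pi$ needs only stationarity of $\pi$. Reversibility is what makes $I-P$ self-adjoint in $L^2(\pi)$, so that the variational gap is an eigenvalue with a real mean-zero eigenfunction attaining equality.
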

\begin{proof}
For fixed $x$, let $\bar f_x=\E_\pi f_x$.  Independence gives
$\E_{V,V'}(f_x(V)-f_x(V'))^2=2\operatorname{Var}_\pi(f_x)$.  Reversibility
gives
$\E_{V,W}(f_x(V)-f_x(W))^2=2\langle f_x,(I-P)f_x\rangle_\pi$.
The variational definition of the gap yields
$\eta\operatorname{Var}_\pi(f_x)\le
\langle f_x,(I-P)f_x\rangle_\pi$.  Multiply by two and average over $x$.
An eigenfunction associated with the first nonzero eigenvalue attains equality.
\end{proof}

The certificate can be made finite-sample at the same semantic-object level.
Conditional on the training data, fix the reader and let
$L_j^{\rm edge}\in[0,B]$ be validation object $j$'s expected transition loss
over the chosen graph and let
$\widehat R_{\rm edge}=n^{-1}\sum_jL_j^{\rm edge}$.  Hoeffding's inequality
and \cref{thm:graph-poincare} give, with probability at least $1-\delta$,
\begin{equation}
 \risk_{\rm orbit}(w)\le
 \frac{\widehat R_{\rm edge}
       +B\sqrt{\log(1/\delta)/(2n)}}{\eta}.
 \label{eq:empirical-certificate}
\end{equation}
A small gap magnifies both observed edge drift and statistical uncertainty.
If the same objects influenced the choice of $w$, a separate validation split
or the simultaneous bound in \cref{thm:finite-gate} is required.

Disconnected audits have $\eta=0$: their edge loss cannot control offsets
between components.  In particular, checking each generator only at the
identity does not establish functional invariance under compositions.  A
generator edge must be sampled at orbit locations, as in a connected Cayley
graph, before \cref{eq:graph-certificate} applies.  Under a fixed
normalization, maximizing the corresponding Poincar\'e gap---the Fiedler value
in a symmetric-Laplacian formulation---is a classical connectivity objective
\citep{ghosh2006growing}.  Here it supplies an audit-design rule:
first connect the declared orbit, then spend remaining comparisons where they
increase $\eta$ or reduce uncertainty.

\begin{proposition}[Cycle leakage in pair holdout]
\label{prop:cycle-leakage}
Let $e$ be an edge lying on a cycle and let $B_{-e}$ be the incidence matrix
after deleting it.  Then $b_e\in\operatorname{col}(B_{-e})$.  Consequently,
any reader that exactly annihilates $HB_{-e}$ also has zero residual on the
held-out contrast $Hb_e$.
\end{proposition}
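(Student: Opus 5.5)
The plan is to exhibit $b_e$ explicitly as a combination of the remaining incidence columns, using the cycle through $e$, and then push that identity through the linear map $H$.

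Let the cycle be $v_0,v_1,\ldots,v_k=v_0$, with $e$ joining $v_0$ and $v_1$. Its remaining edges $e_2,\ldots,e_k$ form a path from $v_1$ back to $v_0$ that avoids $e$. Every $e_i$ therefore survives in $B_{-e}$. I will work first with unweighted oriented incidence columns, $b_f=\mathbf 1_{\text{head}}-\mathbf 1_{\text{tail}}$. Choose signs $\sigma_i\in\{\pm1\}$ so that $\sigma_i b_{e_i}=\mathbf 1_{v_i}-\mathbf 1_{v_{i-1}}$ for the path traversal. The sum then telescopes:
\[
 \sum_{i=2}^k \sigma_i b_{e_i}=\mathbf 1_{v_0}-\mathbf 1_{v_1}.
\]
This equals $\pm b_e$, depending on the orientation of $e$. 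Hence $b_e\in\operatorname{col}(B_{-e})$.

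Weights are the only detail needing care, since the paper uses weighted incidence matrices. Each weighted column is a nonzero scalar multiple $\omega_f b_f$ of the unweighted one. Absorbing $\omega_{e_i}^{-1}$ and $\omega_e$ into the coefficients keeps $b_e$ in the column span. This step needs all weights to be nonzero, which I will state as part of the standing convention; a zero-weight edge is effectively absent.

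Alternatively, I could argue by rank using \cref{cor:incidence-rank}. Deleting an edge on a cycle does not change the number of connected components $c$, so $\operatorname{rank}(B_{-e})=q-c=\operatorname{rank}(B)$. Since $\operatorname{col}(B_{-e})\subseteq\operatorname{col}(B)$ and the two ranks agree, the spans coincide, and so $b_e\in\operatorname{col}(B_{-e})$. I will include the explicit telescoping argument as the main proof, because it also makes clear which coefficients $\lambda$ appear.

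For the consequence, write $b_e=B_{-e}\lambda$. Then
\[
 Hb_e=HB_{-e}\lambda,
\]
so for any reader $w$ with $(HB_{-e})^\top w=0$,
\[
 w^\top Hb_e=\lambda^\top (HB_{-e})^\top w=0.
\]
No step is a real obstacle. The only points to handle carefully are the sign bookkeeping along the cycle and the assumption of nonzero weights.
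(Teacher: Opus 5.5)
Your proof is correct and follows the paper's argument. The paper likewise orients the cycle consistently so that the rescaled signed columns sum to zero, which is your telescoping identity, then absorbs the nonzero weights and pushes the relation through $H$. Your nonzero-weight caveat matches the paper's implicit restriction to nonzero weighted columns, and your rank-based alternative via \cref{cor:incidence-rank} is also valid but not needed.
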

\begin{proof}
Rescale each nonzero weighted column to its unweighted incidence vector and
orient the cycle consistently.  These signed columns sum to zero, so $b_e$ is
a weight-adjusted linear combination of the remaining cycle columns.
Multiplying by $H$ preserves the relation.
\end{proof}

Thus pair-level leave-one-out can report perfect prediction without removing
any independent information.  The appropriate deletion unit is the semantic
object and all of its orbit edges.

\section{Utility-aware repair and safe selection}
\label{sec:selection}

Repair should not cost more clean behavior than it buys in invariance.  With
$G\succ0$ measuring the edit cost, we fit
\begin{equation}
 \widehat w_{\lambda,G}
 =\arg\min_{a^\top w=1}
 \left\{w^\top XX^\top w+\lambda(w-a)^\top G(w-a)\right\}.
 \label{eq:utility-ridge}
\end{equation}
This need not lie in the exact-null plane when $\lambda>0$.  Put
$M=XX^\top+\lambda G$, $Q=M^{-1}$, and $b=\lambda Ga$.  Direct Lagrange
optimization gives
\begin{equation}
 \widehat w_{\lambda,G}=Q(b+\tau a),\qquad
 \tau=\frac{1-a^\top Qb}{a^\top Qa}.
 \label{eq:ridge-closed}
\end{equation}

For leave-one-orbit-out (\LOOO), delete the entire block $U_j$ from
\cref{eq:orbit-design}.  The inverse needed for the refit is available from
one full fit:
\begin{proposition}[Exact block-Woodbury \LOOO]
\label{prop:block-woodbury}
For $\lambda>0$, define $Q_{-j}=(M-U_jU_j^\top)^{-1}$.  Then
\begin{equation}
 Q_{-j}=Q+QU_j(I-U_j^\top QU_j)^{-1}U_j^\top Q.
 \label{eq:block-woodbury}
\end{equation}
Substituting $Q_{-j}$ for $Q$ in \cref{eq:ridge-closed} gives exactly the
reader refitted without semantic object $j$.
\end{proposition}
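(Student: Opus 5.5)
The plan is to derive \cref{eq:block-woodbury} from the Woodbury identity for a symmetric low-rank downdate, and then check that the closed form \cref{eq:ridge-closed} depends on the data only through $M$. First I would establish invertibility. The matrix $M-U_jU_j^\top=\sum_{i\ne j}U_iU_i^\top+\lambda G$ is a positive semidefinite matrix plus $\lambda G\succ0$, so it is positive definite whenever $\lambda>0$. Hence $Q_{-j}$ exists.

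Next I need $I-U_j^\top QU_j$ to be invertible. Write $C=I-U_j^\top QU_j$. By Sylvester's determinant identity,
\[
 \det(M-U_jU_j^\top)=\det(M)\det(C).
\]
Both $M$ and $M-U_jU_j^\top$ are positive definite, so $\det C>0$ and $C$ is invertible. One can also see that $C\succ0$: $Q^{1/2}U_jU_j^\top Q^{1/2}\preceq Q^{1/2}MQ^{1/2}=I$, with strict inequality coming from the $\lambda G$ term.

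With invertibility in hand, I would verify \cref{eq:block-woodbury} by direct multiplication rather than citing the identity blindly. Let $R$ denote the right-hand side. Expanding,
\begin{align*}
 (M-U_jU_j^\top)R &= I-U_jU_j^\top Q+U_j C^{-1}U_j^\top Q-U_j(U_j^\top QU_j)C^{-1}U_j^\top Q.
\end{align*}
The last two terms combine to $U_j(I-U_j^\top QU_j)C^{-1}U_j^\top Q=U_jU_j^\top Q$. This cancels the second term and leaves $I$.

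The final step handles the refit claim. The leave-one-orbit-out problem is \cref{eq:utility-ridge} with $X$ replaced by $X_{-j}=[U_i]_{i\ne j}$. Here I use the paper's convention that weights include any normalization, so deleting object $j$ removes only its block and leaves the other blocks unchanged. Then $X_{-j}X_{-j}^\top=XX^\top-U_jU_j^\top$, so the refit's $M$ becomes $M-U_jU_j^\top$. Meanwhile $b=\lambda Ga$ and the anchor $a$ do not depend on the data.

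The Lagrange derivation of \cref{eq:ridge-closed} goes through verbatim with $M_{-j}\succ0$: stationarity gives $M_{-j}w=b+\tau a$, and the constraint $a^\top w=1$ fixes $\tau$. The denominator is nonzero because $a^\top Q_{-j}a>0$, since $\|a\|=1$ and $Q_{-j}\succ0$. Substituting $Q_{-j}$ for $Q$ therefore yields exactly the refitted reader.

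The only delicate point is invertibility of the capacitance matrix $C$. I expect it to be the main obstacle, and the determinant argument above settles it.
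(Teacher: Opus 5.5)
Your proof is correct and follows the same route as the paper: you show $M-U_jU_j^\top=\lambda G+\sum_{k\ne j}U_kU_k^\top\succ0$, apply the matrix inversion lemma to the negative block update, and reuse the Lagrange calculation of \cref{eq:ridge-closed} with $M$ replaced by its downdate. You go a little further than the paper by proving that the capacitance matrix $I-U_j^\top QU_j$ is invertible (via the determinant identity, or via $U_j^\top QU_j\prec I$) and by checking the identity through direct multiplication, whereas the paper simply cites the lemma.
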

\begin{proof}
Because $M-U_jU_j^\top=\lambda G+\sum_{k\ne j}U_kU_k^\top\succ0$, the stated
inverse exists.  The matrix inversion lemma for a negative rank-$|E_j|$
update gives \cref{eq:block-woodbury}; the constrained minimizer then follows
from the same Lagrange calculation as \cref{eq:ridge-closed}.
\end{proof}

\LOOO{} is a useful diagnostic, but a separate source-disjoint validation set
makes model selection transparent.  Let $\mathcal C$ be $K$ candidate graph,
penalty, and utility-metric choices, all fit without validation data.  For
validation object $j$, let $\Delta^I_{jc}$ be candidate $c$'s orbit-loss minus
the deployed reader's orbit-loss, and let $\Delta^U_{jc}$ be its clean-error
increase.  Suppose their ranges have pre-specified widths $W_I,W_U$; clipping
scores before squared loss is one way to ensure this.

\begin{theorem}[Finite-candidate source-held-out gate]
\label{thm:finite-gate}
On $n_v$ independent validation objects, let
$\widehat\Delta^k_c=n_v^{-1}\sum_j\Delta^k_{jc}$ for
$k\in\{I,U\}$ and set
\begin{equation}
 \epsilon_k=W_k\sqrt{\frac{\log(4K/\delta)}{2n_v}}.
 \label{eq:hoeffding-radius}
\end{equation}
With probability at least $1-\delta$, simultaneously for every candidate and
both metrics,
$|\widehat\Delta^k_c-\E\Delta^k_c|\le\epsilon_k$.
Hence a gate that returns a repair only if
\begin{equation}
 \widehat\Delta^I_c+\epsilon_I<0,
 \qquad
 \widehat\Delta^U_c+\epsilon_U\le0.006
 \label{eq:deployment-gate}
\end{equation}
guarantees, on the same event, lower orbit loss than deployment and at most a
$0.6$-percentage-point clean-accuracy loss.  If no candidate passes, returning
$a$ preserves the baseline.
\end{theorem}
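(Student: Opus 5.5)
The plan is to apply Hoeffding's inequality to each of the $2K$ averages and then take a union bound. The key point to check first is that, conditional on the training data, each candidate reader $c\in\mathcal C$ is a fixed function. This holds because every candidate in $\mathcal C$, together with the deployed reader $a$, was fit without validation data. So for each fixed pair $(c,k)$ the terms $\Delta^k_{1c},\ldots,\Delta^k_{n_vc}$ are independent across validation objects, since the objects are independent and each $\Delta^k_{jc}$ depends only on object $j$ and its whole orbit. They are also identically distributed with mean $\E\Delta^k_c$, and by assumption each takes values in an interval of width $W_k$. Treating the object rather than the edge as the sampling unit is exactly what makes this independence claim valid; correlated edges within an orbit are absorbed into the single summand $\Delta^k_{jc}$.

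The steps run in order as follows.

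\textbf{Concentration for one pair.} For a fixed $(c,k)$, two-sided Hoeffding gives
\[
\Pp\{|\widehat\Delta^k_c-\E\Delta^k_c|>\epsilon\}\le 2\exp(-2n_v\epsilon^2/W_k^2).
\]
Choosing $\epsilon=\epsilon_k$ as in \cref{eq:hoeffding-radius} makes the right-hand side equal to $2\cdot\delta/(4K)=\delta/(2K)$.

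\textbf{Union bound.} Summing over the $2K$ pairs $(c,k)$ gives total failure probability at most $\delta$. This yields the simultaneous event $\mathcal E$ on which every deviation is at most $\epsilon_k$.

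\textbf{Gate consequences.} On $\mathcal E$, any candidate returned by the gate satisfies
\[
\E\Delta^I_c\le\widehat\Delta^I_c+\epsilon_I<0,
\]
which means strictly lower expected orbit loss than deployment. It also satisfies
\[
\E\Delta^U_c\le\widehat\Delta^U_c+\epsilon_U\le0.006,
\]
which means at most a $0.6$-point expected clean-error increase. Because these bounds hold simultaneously for all candidates, the data-dependent choice of which passing candidate to return cannot break the guarantee. If no candidate passes, the gate returns $a$, and then both differences are identically zero.

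The main obstacle is not the calculation but interpreting the guarantees precisely. The quantity $\E\Delta^I_c$ is the population difference of clipped orbit losses, so the statement ``lower orbit loss'' refers to the clipped loss that defines $\Delta^I$. Likewise, ``clean-accuracy loss'' means expected clean-error increase per object. I would state explicitly that the expectation is conditional on the training data and that the widths $W_k$ and the threshold are fixed before looking at validation data. If the candidate set or the widths depended on validation data, the fixed-function argument in the first step would fail. In that case the repair would be to switch to a uniform bound over the larger class.
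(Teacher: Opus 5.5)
Your proof is correct and follows the paper's own argument: Hoeffding at level $\delta/(4K)$ per tail, a union bound over the $2K$ candidate--metric pairs and both tails, and then reading each gate inequality as an upper bound on the population difference on the simultaneous event, which survives adaptive selection. Your remarks on conditioning on the training data and on treating the semantic object, rather than the edge, as the independent sampling unit spell out what the paper leaves implicit.
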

\begin{proof}
Hoeffding's inequality bounds either tail for one candidate and metric by
$\exp(-2n_v\epsilon_k^2/W_k^2)$.  A union bound over $2K$ candidate--metric
pairs and two tails gives the simultaneous event.  On it, each left side of
\cref{eq:deployment-gate} is an upper bound on its population difference, so
the two conclusions follow even after selecting a candidate adaptively.
\end{proof}

When a valid variance upper bound $\sigma_k^2$ is available, the pre-specified
Bernstein radius
\[
 \epsilon_k^{\rm B}=
 \sqrt{\frac{2\sigma_k^2\log(4K/\delta)}{n_v}}
 +\frac{2W_k\log(4K/\delta)}{3n_v}
\]
may replace \cref{eq:hoeffding-radius}.  Candidate selection uses the smallest
invariance upper bound among utility-feasible candidates, then applies the
strict improvement test; it never treats training annihilation as evidence.

\begin{center}
\begin{minipage}{0.96\linewidth}
\hrule\vspace{3pt}
\textbf{Orbit-aware readout audit (fixed before test evaluation)}
\begin{enumerate}[leftmargin=1.6em,itemsep=1pt,topsep=2pt]
\item Split semantic objects into train, validation, and test sets before
generating transformations.
\item For each candidate graph, form orbit blocks $U_j$; record $m$, $s$, and
$\eta$.  A disconnected graph cannot claim compositional coverage.
\item Fit \cref{eq:utility-ridge} over a finite penalty grid.  Compute block
\LOOO{} by \cref{eq:block-woodbury} only as a diagnostic.
\item On validation objects, evaluate all declared state pairs and clean
utility.  Apply the simultaneous gate in \cref{eq:deployment-gate}.
\item Freeze the passing candidate with the lowest orbit-loss upper bound, or
retain $a$ if none passes.  Evaluate the test objects once.
\end{enumerate}
\vspace{3pt}\hrule
\end{minipage}
\end{center}

\section{Experiments}
\label{sec:experiments}

\subsection{Rank collapse despite hundreds of edges}

We draw \numOrbitSynthAudits{} audits with
$d=\numOrbitSynthD{}$ and $q=\numOrbitSynthStates{}$ latent transformation
states under a full-rank spiked $T$.  For each latent state matrix we form a spanning
tree, a cycle, and a complete graph.  They contain respectively
\numOrbitSynthTreeEdges{}, \numOrbitSynthCycleEdges{}, and
\numOrbitSynthCompleteEdges{} recorded pairs, but each connected design has
effective rank \numOrbitSynthRank{}.  In particular, the complete audit has
more nominal pairs than representation dimensions without approaching the
$s\ge r$ infeasibility boundary.

\begin{figure}[t]
\centering
\includegraphics[width=\linewidth]{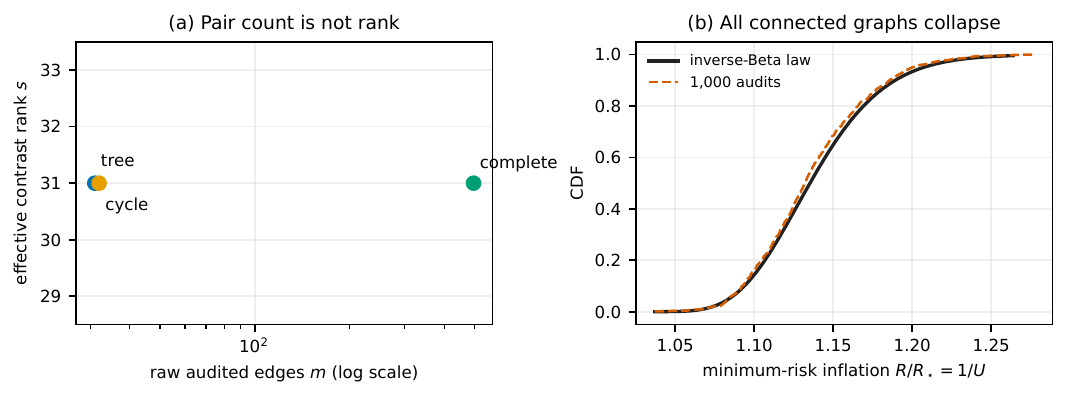}
\caption{Raw edge count collapses to graph rank.  Tree, cycle, and complete
orbits have the same exact nullspace for each audit, and their normalized
optimal risks follow $1/U$, where
$U\sim\operatorname{Beta}((\numOrbitSynthD-\numOrbitSynthRank)/2,
\numOrbitSynthRank/2)$.  Curves use \numOrbitSynthAudits{} independent
audits; theory is not fit to the simulation.}
\label{fig:rank-collapse}
\end{figure}

\Cref{fig:rank-collapse} shows the three empirical laws on top of the same
inverse-Beta prediction; the pre-specified Kolmogorov--Smirnov diagnostics are
applied to $U=\risk_\star/\min\risk$.  Their smallest $p$-value is
\numOrbitBetaMinP{}, and the empirical mean risk ratio is
\numOrbitSynthEmpiricalMeanRatio{} against the exact
\numOrbitSynthTheoryMeanRatio{}.  The maximum numerical difference between
the tree and complete exact-null projectors is \numOrbitNullspaceMax{}.  Thus
$m$ can cross $d$ while the effective constraint rank remains small.

\subsection{Frozen digits: equal budgets, different orbit graphs}
\label{sec:digits}

On the $8\times8$ optical digits \citep{alpaydin1998optical,pedregosa2011scikit},
the task is parity.  For each of \numOrbitDigitsSeeds{} seeds, we train a
one-hidden-layer ReLU network of width \numOrbitDigitsHiddenWidth{} for 180
full-batch epochs on 500 clean sources, then freeze its hidden map, normalized
parity readout $a$, and bias.  Each of \numOrbitDigitsSplits{} source
partitions reserves disjoint sets of $120/300/500/377$ images for
audit/tuning/gate/test before constructing any orbit.

Four binary state bits apply, in order, brightness $+0.08$, contrast $\times
1.15$ about intensity $0.25$, a $0.65$-pixel horizontal shift, and Gaussian
blur with width $0.65$.  Thus every source has \numOrbitDigitsStates{} states.
The clean metric is parity accuracy at the frozen zero threshold and bias.
Using only the 500 representation-training sources, we set
$G=C+0.02\operatorname{tr}(C)I/d$, with $C$ their hidden covariance.  We tune 17
penalties over a trace-scaled $10^{-4}$--$10^4$ grid on the tuning sources.
The exact-null comparator is the minimum-Euclidean-edit reader subject to
$X^\top w=0$ and $a^\top w=1$; the ridge reader uses \cref{eq:utility-ridge}.
Tuning enumerates all 120 state pairs.  On the independent gate split, squared
drift is clipped at the pre-specified $B=\numOrbitGateDriftClip{}$ only for its
signed paired upper bound; an exact binomial upper bound on newly harmed clean
predictions conservatively bounds net accuracy loss.  Test drift is unclipped.

Four audits receive the same budget of \numOrbitDigitsPairBudget{} recorded
pairs: generator checks only at the identity, a Gray-code cycle, the
four-dimensional cube, and the complete graph.  Their respective edge counts
per orbit are $4,16,32,120$, so the budget covers $120,30,15,4$ independent
training sources.  The first design leaves composition
states disconnected; the other three cover all states but have different
gaps and edge multiplicities.  For every design we fit the deployed reader,
exact nulling, and utility-aware ridge.  Ridge penalties are selected only on
source-disjoint validation data; pair leave-one-out and block \LOOO{} are
reported as diagnostics, not selection evidence.

The graph comparison is structural before any reader is fitted.  Writing
$q=\numOrbitDigitsStates{}$, the table reports the mean realized audit rank
across the \numOrbitDigitsSeeds{}$\times$\numOrbitDigitsSplits{} fits:
\begin{center}
\small
\begin{tabular}{@{}lrrrrr@{}}
\toprule
graph & $e_G$ & $n_G$ & $q-c$ & $\overline{\operatorname{rank}X}$ & $\eta$ \\
\midrule
identity generators & $4$ & $120$ & $4$ & \numOrbitDigitsIdentityAuditRank{} & $0$ \\
Gray-code cycle & $q$ & $30$ & $q-1$ & \numOrbitDigitsGrayAuditRank{} & $1-\cos(2\pi/q)$ \\
$4$-cube & $2q$ & $15$ & $q-1$ & \numOrbitDigitsCubeAuditRank{} & $1/2$ \\
complete & $q(q-1)/2$ & $4$ & $q-1$ & \numOrbitDigitsCompleteAuditRank{} & $q/(q-1)$ \\
\bottomrule
\end{tabular}
\end{center}
The fixed budget induces a breadth--connectivity trade-off.  Identity checks
cover the most objects but disconnect composition states; complete graphs give
the strongest per-orbit certificate from the fewest objects; cycles and cubes
lie between.  Neither maximum edge count nor maximum gap is therefore
guaranteed to minimize held-out drift.

Test evaluation enumerates all \numOrbitDigitsAllPairs{} distinct unordered
state pairs per source.  Under the uniform state law, multiplying this
distinct-pair mean by $15/16$ recovers \cref{eq:orbit-risk}; the factor does not
change paired graph comparisons.  Squared drift and threshold disagreement are
averaged over those pairs; clean accuracy uses state zero.  Intervals resample
networks, then source splits within networks, then source objects while keeping
graph contrasts paired at all three levels.  A graph-design advantage is called
directional only when its paired $95\%$ interval excludes zero.

\begin{figure}[t]
\centering
\includegraphics[width=\linewidth]{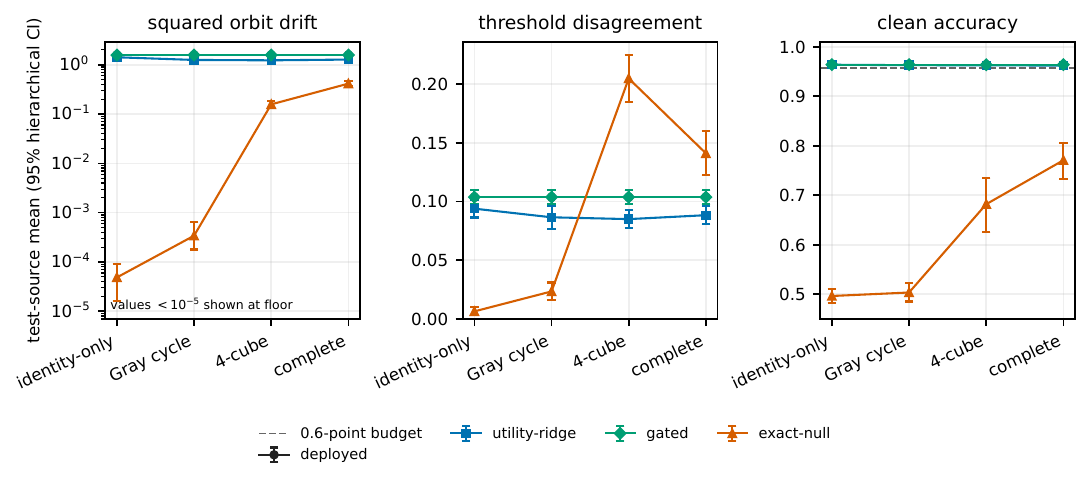}
\caption{Frozen-digits orbit audit at an equal pair budget.  Panels report
all-state squared drift, threshold disagreement, and clean accuracy; the table
reports rank and gap.  Points pair the same source split and network seed; bars
are hierarchical $95\%$ intervals.  The dashed accuracy line marks the
pre-specified $0.6$-point deployment budget.}
\label{fig:orbit-digits}
\end{figure}

The pre-specified cube-minus-identity utility-ridge contrast in
\cref{fig:orbit-digits} is directional: drift changes by
\numOrbitDigitsCubeVsIdentityRidgeDriftDelta{} (paired
hierarchical $95\%$ interval
$[\numOrbitDigitsCubeVsIdentityRidgeDriftDeltaLo{},
\numOrbitDigitsCubeVsIdentityRidgeDriftDeltaHi{}]$).  Under exact nulling,
identity-only drift falls to \numOrbitDigitsIdentityExactDrift{} while clean
accuracy falls from \numOrbitDigitsIdentityDeployedAccuracy{} to
\numOrbitDigitsIdentityExactAccuracy{}.  The joint gate accepts
\numOrbitDigitsGateAcceptPct{}\% of fits and otherwise returns deployment, so
the cube contrast does not satisfy the clean-utility deployment criterion.
Identity-only has $\eta=0$; connected graphs saturate per-orbit
rank $q-1$, but denser graphs see fewer sources and lower aggregate rank.

\subsection{What to optimize under a pair budget}

Let $M$ be a fixed number of recorded comparisons and let a candidate graph
$\mathcal G$ use $e_G$ edges per semantic object.  It can then cover roughly
$n_G=\lfloor M/e_G\rfloor$ independent objects.  More edges within an orbit can
increase $\eta_G$, but they reduce $n_G$ and hence weaken source-level
concentration.  Combining \cref{eq:empirical-certificate} with this accounting
suggests the pre-validation design score
\begin{equation}
 \mathcal B(\mathcal G)=
 \frac{\widehat R_{\rm edge}(\mathcal G)
 +B\sqrt{\log(1/\delta)/(2n_G)}}{\eta_G},
 \qquad \eta_G>0.
 \label{eq:budget-score}
\end{equation}
When several graphs share pilot data, the logarithm is adjusted for the number
of candidates.  Candidate selection is followed by the untouched validation
gate of \cref{thm:finite-gate} and one test evaluation.

Neither a spanning tree nor a complete graph uniformly minimizes
\cref{eq:budget-score}.  A tree attains the full exact-null rank with the fewest
edges, maximizing source breadth, but may have a small gap and a loose orbit
certificate.  A complete graph maximizes local connectivity but repeatedly
measures contrasts already in the same incidence span.  Intermediate graphs
can improve the gap per added edge.  A practical greedy heuristic begins with
a spanning graph and adds the edge with the largest predicted improvement in
the chosen Poincar\'e gap (or the Fiedler value under the corresponding
Laplacian normalization) \citep{ghosh2006growing}, recomputing the source-count
penalty in \cref{eq:budget-score} after each addition.  It stops when another
edge would worsen the bound; the stopping rule has no global optimality
guarantee.  Once $s$ is saturated within an orbit, added edges must improve spectral conditioning
rather than merely increase the recorded pair count.

\noindent\textbf{Mechanical validation.}
Graph-incidence ranks and the equality of tree and complete nullspaces hold to
maximum discrepancy \numOrbitNullspaceMax{}.  The minimum eigenvalue in the numerical
Poincar\'e matrix-slack check is \numOrbitPoincareMinEig{} (up to floating-point
tolerance).  The maximum held-cycle residual after exact nulling is
\numOrbitCycleLeakageMax{}.  Block-Woodbury \LOOO{} agrees with brute-force
refitting to relative error \numOrbitWoodburyMaxRelErr{}, below the
pre-specified $10^{-9}$ tolerance.  At $\delta=0.05$, the source-level gate's
simultaneous event holds in \numOrbitGateCoveragePct{}\% of
\numOrbitGateTrials{} repetitions (nominal $95\%$).

\section{Limitations and conclusion}

The beta law assumes Gaussian latent innovations and a fixed $A$; the graph
certificate is distribution-free but needs a correct equivalence relation and a
finite reversible graph; a frozen linear head cannot recover information the
representation never carried; anchor calibration is normalization and nothing
more; and the gate needs independent source orbits with bounded loss.  The digit
transformations are mild and in fixed order, so they do not probe noncommuting
actions or a misdeclared relation, and digits test orbit geometry rather than
multilingual guards or foundation models.

What an audit certifies is fixed by its rank and its orbit topology.  Count
semantic objects rather than edges, report the contrast rank, connect the orbit
and report its gap, hold out whole orbits, and deploy only when a source-level
bound beats the deployed reader inside the budget.  Pair count alone certifies
none of these.

\label{end:main}
\clearpage

{\small
\setlength{\bibsep}{2pt plus 0.3ex}
\bibliographystyle{plainnat}
\bibliography{references}}

\end{document}